\newtheorem{theorem}{Theorem}
\newtheorem{corollary}{Corollary}
\newtheorem{definition}{Definition}
\newtheorem{proposition}{Proposition}
\def\BibTeX{{\rm B\kern-.05em{\sc i\kern-.025em b}\kern-.08em
    T\kern-.1667em\lower.7ex\hbox{E}\kern-.125emX}}
\begin{document}

\title{{Information-Theoretic Minimax Regret Bounds for Reinforcement Learning based on Duality}\\

\thanks{This work is supported by the Knut and Wallenberg Foundation.}
}

\author{
\IEEEauthorblockN{Raghav Bongole, Amaury Gouverneur, Borja Rodríguez-Gálvez, Tobias J. Oechtering, and Mikael Skoglund}
\IEEEauthorblockA{Division of Information Science and Engineering (ISE)\\
KTH Royal Institute of Technology\\
\texttt{\{bongole, amauryg, borjarg, oech, skoglund\}@kth.se}}
}

\maketitle

\begin{abstract}
We study agents acting in an unknown environment where the agent's goal is to find a robust policy. We consider robust policies as policies that achieve high cumulative rewards for all possible environments. To this end, we consider agents minimizing the maximum regret over different environment parameters, leading to the study of minimax regret. This research focuses on deriving information-theoretic bounds for minimax regret in Markov Decision Processes (MDPs) with a finite time horizon. Building on concepts from supervised learning, such as minimum excess risk (MER) and minimax excess risk, we use recent bounds on the Bayesian regret to derive minimax regret bounds. Specifically, we establish minimax theorems and use bounds on the Bayesian regret to perform minimax regret analysis using these minimax theorems. Our contributions include defining a suitable minimax regret in the context of MDPs, finding information-theoretic bounds for it, and applying these bounds in various scenarios.
\end{abstract}

\begin{IEEEkeywords}
information theory, reinforcement learning, minimax theorems.
\end{IEEEkeywords}

\section{Introduction}
\label{sec:introduction}
The study of reinforcement learning (RL) in adversarial environments has received significant attention in recent years \cite{moos2022robust}. Traditional RL approaches assume a stochastic environment where the transition probabilities and rewards are fixed but unknown. A common goal in traditional RL is to find a policy that minimizes regret, which is the difference between the optimal expected cumulative reward that could be achieved if the environment was known and the expected cumulative reward achieved by the agent. 

In many scenarios, we are interested in finding a robust policy that performs well for all possible environments. 
We consider an agent that wants to find a robust policy when the environment acts as an adversary, selecting the worst-case environment parameters. For a given class of Markov Decision Processes, the goal of the agent is to choose a (possibly randomized) policy that tries to minimize the regret for the worst case. This regret is referred to as minimax regret. While theoretical bounds for the minimax regret in some settings are established \cite{lattimore2020bandit, azar2017minimax}, its information-theoretic nature has remained largely unexplored. This research establishes information-theoretic bounds for minimax regret. 

In supervised learning, a notable concept is the minimum excess risk (MER) \cite{gyorfi2023lossless}, \cite{hafez2021rate}. MER is an algorithm-independent quantity that measures, for a supervised learning problem, the gap between its fundamental limit and the best possible algorithm that is uncertain about the environment but has access to the data. MER quantifies the gap between the best possible loss incurred when the algorithm has knowledge of the environment parameters and when it only has access to the data. The work by Xu and Raginsky \cite{xu2022minimum} establishes information-theoretic bounds for the MER in various settings.

Another studied notion in supervised learning is the minimax excess risk, as discussed by \cite{hafez2023information}. In \cite{hafez2023information}, the authors employ minimax theorems to identify conditions under which the minimax duality holds, that is when $\max_x \min_y f(x,y)$ equals $\min_y \max_x f(x,y)$ for a given risk function $f(x,y)$. Then, the authors use bounds established by  \cite{xu2022minimum} for MER to bound the minimax excess risk.

The concept of minimax regret has been investigated in various settings within the literature. Lattimore and Szepesvári explore this topic in the context of partial monitoring, deriving information-theoretic bounds in the Bayesian setting and establishing minimax theorems \cite{lattimore2019information}. Then they use these   bounds on the Bayesian regret to subsequently bound  the minimax regret in finite-action partial monitoring settings. In the context of Markov Decision Processes (MDPs), Buening et al. \cite{buening2023minimax} establish a minimax theorem for finite state and action spaces, assuming that the reward is a known deterministic function of the state and action. In this paper, we extend their results and prove a minimax theorem for continuous states and action spaces given additional conditions such as continuity of regret.
Gouverneur et al. \cite{gouverneur2022information} establish bounds for the minimum Bayesian regret (MBR) in reinforcement learning. Like the MER, the MBR is an algorithm-independent quantity that measures the gap between the fundamental limit of an RL problem and the best possible cumulative reward that an agent can achieve.

Our research builds on these foundations by focusing on the RL problem in the context of MDPs with finite time horizon. Inspired by \cite{xu2022minimum, hafez2023information, lattimore2019information, buening2023minimax, gouverneur2022information}, we derive minimax theorems and use the bounds on the minimum Bayesian regret to establish information-theoretic minimax regret bounds. More specifically, we make the following contributions. (i) We define a notion of minimax regret for RL that is suitable for information-theoretic analysis. (ii) Using duality principles, we then establish connections between the minimax regret and the minimum Bayesian regret. (iii) We then derive bounds on minimax regret, demonstrating their applicability in various scenarios. (iv) Finally, we derive explicit minimax regret bounds for various settings, such as multi-armed bandits, linear bandits, and contextual bandits.

\section{Notation and Preliminaries}
\label{sec:notation}
Sets are denoted by calligraphic letters (e.g., state space \(\mathcal{S}\)), various notions of regret by fraktur font (e.g., \(\mathfrak{R}\)) and $\sigma$-algebras are denoted by script letters (e.g., \(\mathscr{S}\)). 

 Random variables are denoted by capital letters (e.g., \(\Theta\)), and their realizations by lowercase letters (e.g., \(\theta\)). The expectation of $X$ is denoted by \(\mathbb{E}_X[X]\) (or $\mathbb{E}[X]$ if it is clear from context), and conditional expectation of $X$ given $y$ is given by \(\mathbb{E}^y[X]\). $\Delta(\mathcal{X})$ denotes the set of all probability measures on $\mathcal{X}$. Entropy \(H(X)\), KL-divergence \(D_{KL}(P\mid\mid Q)\), and mutual information \(I(X; Y)\) are defined in standard terms.
 Consider a Polish space \(\mathcal{X}\) equipped with a metric \(\rho\). For probability distributions \(P\) and \(Q\) on \(\mathcal{X}\), the Wasserstein distance is defined as \(W(P,Q) \coloneqq \inf_{D \in \Gamma(P,Q)} \int \rho \, dD\), where \(\Gamma(P,Q)\) is the set of joint distributions with marginals \(P\) and \(Q\).

\section{Model and Definitions}
\label{sec:model}
In the context of reinforcement learning, we consider an agent navigating through an uncertain environment, often modeled as a Markov Decision Process (MDP)\cite{gouverneur2022information} with a finite time horizon  $T \in \mathbb{N}$ time steps. At each time step $t \in \{1, \ldots, T\}$, the environment is characterized by a state $S_t \in \mathcal{S}$ and the agent selects an action $A_t \in \mathcal{A}$. The state then transitions to $S_{t+1}$, and the environment produces an outcome $Y_t \in \mathcal{Y}$ that the agent associates with a reward $R_t$.

More formally, we define a class of MDPs $\mathcal{M}$, parameterized by a random variable $\Theta \in \mathcal{O}$. This class is defined by a state space $\mathcal{S}$, an action space $\mathcal{A}$, a transition kernel $ p: \mathscr{S} \times (\mathcal{S} \times \mathcal{A} \times \mathcal{O}) \rightarrow [0, 1]$ such that
$
\mathbb{P}_{S_{t+1}|S_t,A_t,\Theta} = p(\cdot, (S_t, A_t, \Theta)),
$
and an outcome kernel $y : \mathscr{Y} \times (\mathcal{S} \times  \mathcal{O}) \rightarrow [0, 1]$ such that
$
\mathbb{P}_{Y_t|S_t,\Theta} = y(\cdot, (S_t, \Theta)),
$
and an initial state prior distribution $\mathbb{P}_{S_1|\Theta}$ such that $S_1 \sim \mathbb{P}_{S_1|\Theta}$. The reward received by the agent is modeled as a deterministic function of the outcome \( Y_t \) and the chosen action \( A_t \), denoted by \( r(Y_t, A_t) \). We can write the random variable \( R_t \) as \( R_t = r(Y_t, A_t) \). Thus the class of MDPs $\mathcal{M}$ can be seen as a 6-tuple $\mathcal{M}\coloneqq (\mathcal{S}, \mathcal{A}, p, y, r, T)$.

For a fixed $\theta$, we define $\mathcal{M}_{\theta}$ as the MDP corresponding to $\theta$. The value of $\theta$ is unknown to the agent, however, $\mathcal{S}, \mathcal{A}, p,y, r$ and $T$ are known. Learning involves characterizing the environment parameter $\theta$. In the Bayesian setting, the agent is assumed to have access to a known prior \( \mathbb{P}_\Theta \). However, our primary interest lies in the frequentist setting, where the environment parameter \( \theta \) is unknown, and the goal is to characterize bounds for the worst-case parameter \( \theta \). We leverage Bayesian reinforcement learning bounds to derive these frequentist bounds.

Let the history up to time $t$ be a sequence of random variables denoted by $H^t \in \mathcal{H}^t$, where $H^t = (H_1, \ldots, H_{t})$ and $H_{t+1} = (S_t, A_t, Y_t)$. The agent selects actions based on its current state and history, formalized through a policy function \(\pi\coloneqq \{\pi_t: \mathcal{S} \times \mathcal{H}^t \rightarrow \mathcal{A}\}_{t=1}^T\). Further, let $\mathcal{P}$ be the set of all policies.

We define the utility, quantifying the cumulative reward the agent expects to achieve by following a given policy $\pi$ for a given MDP $\mathcal{M}_\theta$.

\begin{definition}[Utility]
The utility of an agent in an MDP $\mathcal{M}_{\theta}$ following a policy $\pi$ is given by
$
U_{\mathcal{M}}(\pi_t, \theta) \coloneqq \mathbb{E}^\theta\left[ \sum_{t=1}^T r(Y_t, \pi_t(S_t, H^t)) \right],
$
where the expectation is taken over the random variables $Y_t$, $S_t$ and $H^t$ for $t \in \{1,..T\}$ drawn following policy $\pi$.
\end{definition}
To evaluate the efficiency of a policy, we can compare it with the optimal utility, which represents the best possible outcome if the agent has perfect knowledge of the environment parameter \(\theta\).
\begin{definition}[Optimal Utility]
The optimal utility of an agent in an MDP $\mathcal{M}_{\theta}$ is the maximum utility that can be attained when the policy function has access to the environment parameters. This quantity is algorithm-independent and is given by 
$
   U^*_{\mathcal{M}}(\theta)\coloneqq \sup_{f_{\theta}: \mathcal{S} \rightarrow \mathcal{A} }\mathbb{E}^\theta\left[ \sum_{t=1}^T r(Y_t, f_{\theta}(S_t))  \right]. 
$
\end{definition}
The agent's performance is assessed by the regret, which is defined as the difference between the optimal utility and the utility achieved by the agent.

\begin{definition}[Regret]
The regret of the agent is the difference between the optimal utility of an MDP and the utility under a particular policy $\pi$. It is given by
$\mathfrak{R}_{\mathcal{M}}(\pi, \theta)\coloneqq U^*_{\mathcal{M}}( \theta) - U_{\mathcal{M}}(\pi, \theta).$
\end{definition}
Our goal is to characterize the optimal worst-case regret, defined as the minimax regret. It evaluates the policy's performance under the most challenging environment parameter.
\begin{definition}[Minimax Regret]
The minimax regret, $\mathfrak{M}_{\mathcal{M}}$, is the regret obtained by an agent following the optimal policy for the worst-case environment parameters. The minimax regret is given by $\mathfrak{M}_{\mathcal{M}} \coloneqq \inf_{\mathbb{P}_\Pi \in \Delta(\mathcal{P})} \sup_{\theta \in \mathcal{O}} \mathbb{E}_\Pi \left[\mathfrak{R}_{\mathcal{M}}(\Pi, \theta) \right].$

\end{definition}
In contrast to minimax regret, Bayesian regret considers a prior distribution over environment parameters, providing an average-case analysis.

\begin{definition}[Bayesian Regret]
The Bayesian regret of an MDP following a policy $\pi$, denoted by $\mathfrak{BR}_{\mathcal{M}}(\pi, \mathbb{P}_\Theta)$ is defined as the average regret with respect to a prior distribution of $\mathbb{P}_\Theta$ of random variable $\Theta$. The Bayesian Regret is given by 
$\mathfrak{BR}_{\mathcal{M}}(\pi, \mathbb{P}_\Theta)\coloneqq 
\mathbb{E}_{\Theta \sim \mathbb{P}_\Theta}[\mathfrak{R}_{\mathcal{M}}(\pi, \Theta)].$
\end{definition}
\begin{proposition}
Let $\mathbb{P}_\Theta$ be absolutely continuous with respect to $\mu$ with density $p_\Theta$. Then, the Bayesian regret can be expressed as 
\begin{align*}
\mathfrak{BR}_{\mathcal{M}}(\pi, \mathbb{P}_\Theta)\smash=&\sup \mathbb{E}\left[ \sum_{t=1}^T r(Y_t, f(S_t,\Theta)) \right] - \mathbb{E}_\Theta \left[U_{\mathcal{M}}(\pi, \Theta)\right]
\end{align*}
where the supremum is taken over the function space ${f: \mathcal{S} \times \mathcal{O} \rightarrow \mathcal{A}}  $.
\end{proposition}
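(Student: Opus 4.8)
The plan is to first separate the easy term and then reduce the statement to an interchange of a supremum and an expectation. By the definition of regret, $\mathfrak{R}_{\mathcal{M}}(\pi,\Theta)=U^*_{\mathcal{M}}(\Theta)-U_{\mathcal{M}}(\pi,\Theta)$, so by linearity of expectation $\mathfrak{BR}_{\mathcal{M}}(\pi,\mathbb{P}_\Theta)=\mathbb{E}_\Theta[U^*_{\mathcal{M}}(\Theta)]-\mathbb{E}_\Theta[U_{\mathcal{M}}(\pi,\Theta)]$. The second summand already matches the claimed expression, so the whole proposition follows once we establish the identity $\mathbb{E}_\Theta[U^*_{\mathcal{M}}(\Theta)]=\sup_{f:\mathcal{S}\times\mathcal{O}\to\mathcal{A}}\mathbb{E}\big[\sum_{t=1}^T r(Y_t,f(S_t,\Theta))\big]$ and substitute; here the absolute continuity of $\mathbb{P}_\Theta$ with respect to $\mu$ is used only to write $\mathbb{E}_\Theta[\,\cdot\,]=\int(\cdot)\,p_\Theta\,d\mu$, the form in which the interchange lemma is conveniently stated.

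For the inequality ``$\geq$'', I would fix an arbitrary $f:\mathcal{S}\times\mathcal{O}\to\mathcal{A}$, condition on $\Theta$ and use the tower rule to get $\mathbb{E}[\sum_t r(Y_t,f(S_t,\Theta))]=\mathbb{E}_\Theta\big[\mathbb{E}^\Theta[\sum_t r(Y_t,f(S_t,\Theta))]\big]$. For each fixed $\theta$ the section $f(\cdot,\theta)$ is an admissible map $\mathcal{S}\to\mathcal{A}$, so $\mathbb{E}^\theta[\sum_t r(Y_t,f(S_t,\theta))]\le U^*_{\mathcal{M}}(\theta)$ by the definition of the optimal utility; integrating over $\Theta$ and taking the supremum over $f$ gives the inequality.

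For the reverse inequality I would fix $\epsilon>0$ and, for each $\theta$, pick an $\epsilon$-optimal map $f_\theta:\mathcal{S}\to\mathcal{A}$ with $\mathbb{E}^\theta[\sum_t r(Y_t,f_\theta(S_t))]\ge U^*_{\mathcal{M}}(\theta)-\epsilon$. The delicate point is that the assignment $\theta\mapsto f_\theta$ can be taken measurable, so that $f(s,\theta):=f_\theta(s)$ is a genuine element of the class $\{f:\mathcal{S}\times\mathcal{O}\to\mathcal{A}\}$ and so that $\theta\mapsto U^*_{\mathcal{M}}(\theta)$ is itself measurable; this is where the standing regularity — Polish state, action and parameter spaces, measurable transition and outcome kernels — enters, through a measurable-selection argument (Kuratowski–Ryll-Nardzewski / Jankov–von Neumann applied to the graph of the $\epsilon$-maximizers, or the theory of normal integrands). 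With such an $f$, $\mathbb{E}[\sum_t r(Y_t,f(S_t,\Theta))]=\mathbb{E}_\Theta\big[\mathbb{E}^\Theta[\sum_t r(Y_t,f_\Theta(S_t))]\big]\ge\mathbb{E}_\Theta[U^*_{\mathcal{M}}(\Theta)]-\epsilon$, so the supremum over $f$ is at least $\mathbb{E}_\Theta[U^*_{\mathcal{M}}(\Theta)]-\epsilon$; letting $\epsilon\downarrow 0$ closes the identity, and combining with the first paragraph proves the proposition.

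I expect the measurable-selection step to be the only real obstacle: the rest is bookkeeping with the tower property and the definition of a pointwise supremum. A secondary thing to verify is that restricting the inner optimization to state-feedback maps $f_\theta:\mathcal{S}\to\mathcal{A}$, rather than to history-dependent or randomized policies, costs nothing; but this is exactly how $U^*_{\mathcal{M}}$ and the claimed supremum are both defined, and it is consistent because $Y_t$ depends on the past only through $(S_t,\Theta)$, so I would simply align the two function classes rather than invoke a dynamic-programming optimality principle.
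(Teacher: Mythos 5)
Your proposal is correct and follows the same decomposition as the paper: split the Bayesian regret into $\mathbb{E}_\Theta[U^*_{\mathcal{M}}(\Theta)]-\mathbb{E}_\Theta[U_{\mathcal{M}}(\pi,\Theta)]$ and then identify the first term with the supremum over $\theta$-dependent selectors $f:\mathcal{S}\times\mathcal{O}\to\mathcal{A}$. The difference is one of rigor, and it cuts in your favor. The paper justifies the interchange of the supremum and the integral with the phrase ``by the linearity of expectation,'' which is not a valid reason on its own --- linearity lets you split the two integrals, but it does not let you pull $\int(\cdot)\,p_\Theta\,d\mu$ inside a supremum over a function class. The correct justification is exactly the two-sided argument you give: the ``$\ge$'' direction because every section $f(\cdot,\theta)$ is an admissible map $\mathcal{S}\to\mathcal{A}$, and the ``$\le$'' direction by stitching together pointwise $\epsilon$-optimal maps $f_\theta$ into a single $f(s,\theta)=f_\theta(s)$, which requires a measurable-selection argument (and measurability of $\theta\mapsto U^*_{\mathcal{M}}(\theta)$) that the paper does not address. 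Your identification of that selection step as the only genuine obstacle is accurate; the paper implicitly assumes it. One small caveat: the paper does not explicitly state the Polish-space and measurability hypotheses you invoke for Kuratowski--Ryll-Nardzewski, so strictly speaking your argument needs those as standing assumptions (or the statement should be read with the supremum over all, not necessarily measurable, selectors, in which case the stitching is immediate and your proof simplifies).
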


\begin{proof} By using the linearity of expectation, we can interchange the supremum and the expectation, leading to the following equalities for the Bayesian regret:
\begin{align*}
    &\mathfrak{BR}_{\mathcal{M}}(\pi, \mathbb{P}_\Theta)= 
\mathbb{E}_{\Theta \sim \mathbb{P}_\Theta}[\mathfrak{R}_{\mathcal{M}}(\pi, \Theta)] = \int_{\theta} \mathfrak{R}(\pi, \theta)p_\Theta(\theta) d\mu \\
&= \int_{\theta}\sup_{f: \mathcal{S} \times \mathcal{O} \rightarrow \mathcal{A}} V(f, \theta) p_\Theta(\theta) d\mu 
-\int_{\theta}U_{\mathcal{M}}(\pi, \theta)p_\Theta(\theta) d\mu  \\
&= \sup_{f: \mathcal{S} \times \mathcal{O} \rightarrow \mathcal{A}} \mathbb{E}\left[ \sum_{t=1}^T r(Y_t, f(S_t,\Theta)) \right] - \mathbb{E}_\Theta \left[U_{\mathcal{M}}(\pi, \Theta)\right].
\end{align*}
where ${V}(f, \theta)\smash{\coloneqq} \mathbb{E}^\theta\big[ \sum_{t=1}^T r(Y_t, f(S_t,\theta)) \big]$. 
\end{proof}

Further, we define the minimum Bayesian regret as the lowest achievable average regret under any policy, given the prior distribution over the environment parameters.

\begin{definition}[Minimum Bayesian Regret]
Let us consider $\pi\coloneqq \{\pi_t : \mathcal{S} \times \mathcal{H}^t \rightarrow \mathcal{A}\}^T_{t=1}$. The minimum Bayesian regret (MBR) under a prior $\mathbb{P}_\Theta$ is denoted by $\mathfrak{F}_{\mathcal{M}}(\mathbb{P}_\Theta)$ and is given by
$\mathfrak{F}_{\mathcal{M}}(\mathbb{P}_\Theta)\coloneqq \inf_{\pi}\mathfrak{BR}_{\mathcal{M}}(\pi, \mathbb{P}_\Theta).
$

\end{definition}
Finally, we introduce the worst-case minimum Bayesian regret, corresponding to the maximum MBR achievable under any prior. This concept is critical in game-theoretic scenarios, where the environment selects the most challenging prior, and the agent optimizes its policy in response.

\begin{definition}[Worst-case MBR]
The Worst-case MBR is the maximum MBR that can be achieved by any prior. It can also be interpreted in a game-theoretic fashion as the Bayesian regret when the environment plays first, choosing the worst prior, and the agent plays next, selecting the best possible policy for that prior. Hence, the worst-case MBR is given by
$
\begin{aligned}[t]
&\mathfrak{F}^*_{\mathcal{M}}\coloneqq \sup_{\mathbb{P}_\Theta \in \Delta(\mathcal{O})} \mathfrak{F}_{\mathcal{M}}(\mathbb{P}_\Theta). 
\end{aligned}
$
\end{definition}

\section{Minimax Theorems}
\label{sec:minimax}
In this section, we will state and prove minimax theorems that provide conditions where the worst-case minimum Bayesian regret and the minimax regret are equal.

\subsection{Minimax theorem with continuity conditions}

We will adapt the minimax theorem from Cesa-Bianchi et al. \cite[Theorem 7.1]{cesa2006prediction}. To prove the minimax theorem for regret under these conditions, we need the following to hold:
\begin{enumerate}[label=(\roman*)]
    \item $\mathbb{E}_\Pi \mathbb{E}_\Theta  \left[\mathfrak{R}_{\mathcal{M}}(\Pi, \Theta) \right]$ is bounded and real valued.
    \item $\Delta(\mathcal{P})$ and $\Delta(\mathcal{O})$ are convex sets.
    \item $\Delta(\mathcal{O})$ is in addition, a compact set.
    \item $\mathbb{E}_\Pi \mathbb{E}_\Theta  \left[\mathfrak{R}_{\mathcal{M}}(\Pi, .) \right]$ is continuous with respect to $\mathbb{P}_\Theta$ for a fixed $\mathbb{P}_\Pi$.
    \item $\mathbb{E}_\Pi \mathbb{E}_\Theta  \left[\mathfrak{R}_{\mathcal{M}}(\Pi, \Theta) \right]$ is convex for fixed $\mathbb{P}_{\Pi}$ and is concave for fixed $\mathbb{P}_{\Theta}$. 
\end{enumerate}
\begin{theorem}\label{theorem: minimax for continuous}
    \textit{Let $(\mathcal{O},\delta_\mathcal{O})$ be a compact metric space and $(\pi,  \delta_\mathcal{P})$ be a metric space of policies. Let $ \Delta(\mathcal{O})$ and $  \Delta(\mathcal{P})$ be the set of all Borel probability measures on $(\mathcal{O},\delta_\mathcal{O})$ and $(\pi, \delta_\mathcal{P})$ respectively. Under the conditions $\mathfrak{R}_{\mathcal{M}}(\pi, \theta)$ is bounded for all $\pi \in \mathcal{P}$ and $\theta \in \mathcal{O}$,  and  if $\mathfrak{R}_{\mathcal{M}}(\pi, .)$ is a continuous function of $\theta$ for each $\pi$ we have, \(\mathfrak{M}_{\mathcal{M}} = \mathfrak{F}^*_{\mathcal{M}}.\)}
\end{theorem}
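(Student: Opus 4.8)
The plan is to reduce the claimed equality to a single bilinear minimax theorem applied to the functional $g(\mathbb{P}_\Pi,\mathbb{P}_\Theta)\coloneqq \mathbb{E}_\Pi\mathbb{E}_\Theta[\mathfrak{R}_{\mathcal{M}}(\Pi,\Theta)]$ on $\Delta(\mathcal{P})\times\Delta(\mathcal{O})$. The first step is two elementary ``purification'' identities. For fixed $\mathbb{P}_\Pi$, the map $\theta\mapsto\mathbb{E}_\Pi[\mathfrak{R}_{\mathcal{M}}(\Pi,\theta)]$ is bounded, so its supremum over $\mathcal{O}$ is approached by Dirac measures and coincides with $\sup_{\mathbb{P}_\Theta\in\Delta(\mathcal{O})}g(\mathbb{P}_\Pi,\mathbb{P}_\Theta)$; likewise, for fixed $\mathbb{P}_\Theta$ the map $\pi\mapsto\mathbb{E}_\Theta[\mathfrak{R}_{\mathcal{M}}(\pi,\Theta)]$ (which is exactly $\mathfrak{BR}_{\mathcal{M}}(\pi,\mathbb{P}_\Theta)$) has the same infimum over $\mathcal{P}$ as over $\Delta(\mathcal{P})$, so $\mathfrak{F}_{\mathcal{M}}(\mathbb{P}_\Theta)=\inf_{\mathbb{P}_\Pi\in\Delta(\mathcal{P})}g(\mathbb{P}_\Pi,\mathbb{P}_\Theta)$. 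Combining these, $\mathfrak{M}_{\mathcal{M}}=\inf_{\mathbb{P}_\Pi}\sup_{\mathbb{P}_\Theta}g$ and $\mathfrak{F}^*_{\mathcal{M}}=\sup_{\mathbb{P}_\Theta}\inf_{\mathbb{P}_\Pi}g$, and the weak-duality inequality $\mathfrak{M}_{\mathcal{M}}\ge\mathfrak{F}^*_{\mathcal{M}}$ is then immediate.

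For the reverse inequality I would invoke the minimax theorem of Cesa-Bianchi and Lugosi \cite[Theorem 7.1]{cesa2006prediction}, checking its hypotheses, which are precisely items (i)--(v) preceding the theorem. Real-valuedness and boundedness of $g$ follow from the assumed boundedness of $\mathfrak{R}_{\mathcal{M}}$. The sets $\Delta(\mathcal{P})$ and $\Delta(\mathcal{O})$ are convex, being sets of probability measures, and $g$ is affine in each argument with the other fixed, hence simultaneously convex and concave in each. Compactness of $\Delta(\mathcal{O})$ in the topology of weak convergence follows from Prokhorov's theorem, since $(\mathcal{O},\delta_{\mathcal{O}})$ is a compact metric space. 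The remaining and most delicate hypothesis is weak continuity of $\mathbb{P}_\Theta\mapsto g(\mathbb{P}_\Pi,\mathbb{P}_\Theta)$ for each fixed $\mathbb{P}_\Pi$: here I would first argue that $\theta\mapsto\mathbb{E}_\Pi[\mathfrak{R}_{\mathcal{M}}(\Pi,\theta)]$ is a bounded \emph{continuous} function of $\theta$ --- continuity passing through the expectation over $\Pi$ by dominated convergence, using the uniform bound on $\mathfrak{R}_{\mathcal{M}}$ together with the assumed continuity of $\mathfrak{R}_{\mathcal{M}}(\pi,\cdot)$ for each $\pi$ --- and then conclude that integrating this bounded continuous function against $\mathbb{P}_\Theta$ is weakly continuous, directly from the definition of weak convergence.

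Once all five hypotheses are verified, \cite[Theorem 7.1]{cesa2006prediction} gives $\inf_{\mathbb{P}_\Pi}\sup_{\mathbb{P}_\Theta}g=\sup_{\mathbb{P}_\Theta}\inf_{\mathbb{P}_\Pi}g$, i.e.\ $\mathfrak{M}_{\mathcal{M}}=\mathfrak{F}^*_{\mathcal{M}}$. I expect the continuity step to be the main obstacle: moving from pointwise-in-$\pi$ continuity of $\mathfrak{R}_{\mathcal{M}}(\pi,\cdot)$ to continuity of the policy-averaged regret requires a domination argument, and one must ensure the joint measurability of $(\pi,\theta)\mapsto\mathfrak{R}_{\mathcal{M}}(\pi,\theta)$ needed to define $g$ and to apply the dominated convergence / Fubini steps. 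The appeal to Prokhorov's theorem, and the identification of weak convergence as the topology under which the theorem's hypotheses hold, are the other places where the compact-metric (Polish) structure of the spaces is essential.
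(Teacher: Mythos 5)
Your proposal is correct and follows essentially the same route as the paper: both reduce the claim to verifying conditions (i)--(v) for the Cesa-Bianchi--Lugosi style minimax theorem (the paper routes this through Theorem 2 of Hafez-Kolahi et al., which adapts that result), using convexity and affinity of the bilinear functional, Prokhorov compactness of $\Delta(\mathcal{O})$, and a weak-convergence plus dominated-convergence argument for continuity in $\mathbb{P}_\Theta$. The only differences are cosmetic --- you make explicit the purification identities ($\sup_\theta = \sup_{\mathbb{P}_\Theta}$, $\inf_\pi = \inf_{\mathbb{P}_\Pi}$) and the weak-duality direction, and you integrate over $\Pi$ before $\Theta$ in the continuity step where the paper does the reverse; your remark about joint measurability of $(\pi,\theta)\mapsto\mathfrak{R}_{\mathcal{M}}(\pi,\theta)$ flags a hypothesis the paper also leaves implicit.
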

\begin{proof}
The proof follows from Theorem 2 in \cite{hafez2023information}, where they establish minimax duality conditions for excess risk in supervised learning. To prove the minimax theorem in our setting, we verify conditions (i) through (v).
 Condition (i) holds by assumption.
Condition (ii) is satisfied because the set of all Borel probability measures on a metric space is known to form a convex set.
For condition (iii), we observe that \(\Delta(\mathcal{O})\) is compact with respect to the Prokhorov metric \(\delta_m\) \cite[Proposition 5.3] {van2003probability}.
Condition (iv) is verified by demonstrating sequential continuity, which is equivalent to continuity in a metric space \cite{orevkov1973equivalence}. Let us define $g(\theta)$ for any fixed $\theta$ as \( g(\theta) = \mathfrak{R}_{\mathcal{M}}(\pi, \theta) \). We know $g(\theta)$ is both bounded and continuous. If \( \mathbb{P}_{\Theta_n}\) converges to \( \mathbb{P}_{\Theta_0} \) under the Prokhorov metric \(\delta_m\), then \( \mathbb{P}_{\Theta_n} \) converges to \( \mathbb{P}_{\Theta_0} \) weakly in measure, as \((\mathcal{O}, \delta_\mathcal{O})\) is a compact and separable metric space \cite[Theorem 4.2]{van2003probability}. We define \( h(\mathbb{P}_{\Theta}) \) as the expected value of \( g(\Theta) \) under the distribution \( \mathbb{P}_{\Theta} \), i.e., 
\(h(\mathbb{P}_{\Theta}) = \mathbb{E}_\Theta[g(\Theta)] = \int g(\theta) \, d\mathbb{P}_{\Theta}.\)
Furthermore, let us define the sequence \(\{h_n\}_{n\in \mathbb{N}}\), where \( h_n = h(\mathbb{P}_{\Theta_n}) \). Convergence in measure, combined with the fact that \(g(\theta)\) is bounded and continuous, implies that \( \int g(\theta) \, d\mathbb{P}_{\Theta_n}\) converges to \( \int g(\theta) \, d\mathbb{P}_{\Theta_0} \) pointwise. Hence, we conclude that \( h_n \) converges to \(  h_0 \) \cite[Theorem 3.2]{van2003probability}). The Dominated Convergence Theorem, along with the boundedness and pointwise convergence, ensures that \( \lim_{n \to \infty} \mathbb{E}_\Pi [h_n] = \mathbb{E}_\Pi [h_0] \) \cite{gray2009probability}, confirming the continuity of \( \mathbb{E}_\Pi \mathbb{E}_\Theta [\mathfrak{R}_{\mathcal{M}}(\Pi, \cdot)] \) with respect to \( \mathbb{P}_\Theta \).
Condition (v) is satisfied because the expectation is linear with respect to the distribution.

\end{proof}
\Cref{theorem: minimax for continuous} establishes that the minimax duality holds even in the case of continuous state and action spaces provided some additional conditions on regret are met. Further, using~\Cref{theorem: minimax for continuous}, we demonstrate that the minimax theorem holds even in the case of stochastic rewards with finite states and actions and with a finite class of MDPs. Also, notably, $\mathcal{Y}$ and $\mathcal{H}$ need not be finite.
\begin{corollary}
\label{corollary: minimax for finite}
If \( \mathfrak{R}_{\mathcal{M}}(\pi, \theta) \) is bounded for all \( \pi \in \mathcal{P} \) and \( \theta \in \mathcal{O} \), and if \( \mathcal{S} \), \( \mathcal{A} \), and \( \mathcal{O} \) are finite, then \( \mathfrak{M}_{\mathcal{M}} = \mathfrak{F}^*_{\mathcal{M}} \).
\end{corollary}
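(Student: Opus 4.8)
The plan is to obtain \Cref{corollary: minimax for finite} as a direct specialization of \Cref{theorem: minimax for continuous}: the point is that finiteness of $\mathcal{O}$ makes the continuity requirement on the regret vacuous while simultaneously supplying the compactness the theorem needs. Concretely, I would equip $\mathcal{O}$ with the discrete metric ($\delta_\mathcal{O}(\theta,\theta') = 0$ if $\theta = \theta'$ and $1$ otherwise). Since $|\mathcal{O}| < \infty$, the pair $(\mathcal{O},\delta_\mathcal{O})$ is a compact metric space, so $\Delta(\mathcal{O})$ with the Prokhorov metric is just the probability simplex on the finite set $\mathcal{O}$ with its usual (Euclidean) topology, which is convex and compact. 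I would regard $\mathcal{P}$ as a metric space $(\mathcal{P},\delta_\mathcal{P})$ under a suitable metric, so that $\Delta(\mathcal{P})$ is the associated set of Borel probability measures, exactly as the theorem requires.

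With these choices, I would verify the two hypotheses of \Cref{theorem: minimax for continuous}. Boundedness of $\mathfrak{R}_{\mathcal{M}}(\pi,\theta)$ for all $\pi \in \mathcal{P}$ and $\theta \in \mathcal{O}$ is assumed verbatim in the corollary. For continuity, observe that under the discrete metric every point of $\mathcal{O}$ is isolated and the only convergent sequences are eventually constant; hence \emph{every} map $\theta \mapsto \mathfrak{R}_{\mathcal{M}}(\pi,\theta)$ is automatically continuous. Both hypotheses therefore hold, and \Cref{theorem: minimax for continuous} yields $\mathfrak{M}_{\mathcal{M}} = \mathfrak{F}^*_{\mathcal{M}}$. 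It is worth remarking, as the surrounding discussion notes, that nothing in this argument forces $\mathcal{Y}$ or the history spaces $\mathcal{H}^t$ to be finite, so the rewards $r(Y_t, A_t)$ may genuinely be stochastic; finiteness of $\mathcal{S}$ and $\mathcal{A}$ is what keeps us in the classical finite-MDP regime (for instance, it makes $U^*_{\mathcal{M}}(\theta)$ a maximum over the finite family of maps $f_\theta\colon \mathcal{S}\to\mathcal{A}$).

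The step I expect to require the most care --- rather than genuine difficulty --- is checking that the present model really fits the template of \Cref{theorem: minimax for continuous}: one should confirm that $\mathcal{P}$ admits a metric $\delta_\mathcal{P}$ for which $\pi \mapsto \mathfrak{R}_{\mathcal{M}}(\pi,\theta)$ is measurable and $\Delta(\mathcal{P})$ is rich enough that $\mathfrak{M}_{\mathcal{M}}$ coincides with the value entering that theorem (this is where finiteness of $\mathcal{S}$ and $\mathcal{A}$ can be invoked, e.g.\ to endow $\mathcal{P}$ with a well-behaved, separable metric), and that the bilinear map $(\mathbb{P}_\Pi,\mathbb{P}_\Theta)\mapsto \mathbb{E}_\Pi\mathbb{E}_\Theta[\mathfrak{R}_{\mathcal{M}}(\Pi,\Theta)]$ then satisfies conditions (i)--(v) from the proof of \Cref{theorem: minimax for continuous} without change. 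Once this bookkeeping is settled, the corollary follows by a one-line application of the theorem.
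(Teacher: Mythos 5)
Your proposal is correct and follows essentially the same route as the paper: the paper likewise equips $\mathcal{O}$ (and $\mathcal{P}$) with the discrete metric, notes that finiteness gives compactness and that continuity of $\mathfrak{R}_{\mathcal{M}}(\pi,\cdot)$ is automatic because convergent sequences in a discrete space are eventually constant, and then invokes \Cref{theorem: minimax for continuous}. The extra bookkeeping you flag about the metric on $\mathcal{P}$ is reasonable diligence but not something the paper's proof elaborates on either.
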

\begin{proof}
Let us equip $\mathcal{P}$ and $\mathcal{O}$ with the discrete metrics $\delta_\mathcal{P}$ and $\delta_\mathcal{O}$, respectively. Since \( \mathcal{O} \) is finite, the space \( (\mathcal{O}, \delta_\mathcal{O}) \) is compact under the discrete metric \( \delta_\mathcal{O} \). Additionally, \( \mathfrak{R}_\mathcal{M}(\pi, \cdot) \) is continuous with respect to \( \theta \) for a fixed \( \pi \). This follows from the fact that, under the discrete metric, any convergent sequence \(\{\theta_n\}_{n \in \mathbb{N}} \subset \mathcal{O}\) eventually becomes constant, implying that \( \mathfrak{R}_\mathcal{M}(\pi, \theta_n) \) converges. Therefore, all the assumptions of~\Cref{theorem: minimax for continuous} are satisfied, and we conclude that \( \mathfrak{M}_{\mathcal{M}} = \mathfrak{F}^*_{\mathcal{M}} \).

\end{proof}

\section{Upper bounds on minimax regret}
\label{sec:upper_bounds}
Under certain conditions, the minimum Bayesian regret can be bounded above by a quantity dependent on the prior \cite{gouverneur2022information}. Specifically, we have the following inequalities: \(\mathfrak{F}_{\mathcal{M}}(\mathbb{P}_\Theta) =    
 \inf_{\pi \in \mathcal{P}} \mathbb{E}_\Theta \left[ \mathfrak{R}_{\mathcal{M}}(\pi, \Theta) \right] = 
\inf_{\mathbb{P}_\Pi \in \Delta(\mathcal{P})} \mathbb{E}_\Pi \mathbb{E}_\Theta  \left[\mathfrak{R}_{\mathcal{M}}(\Pi, \Theta) \right]  
 \leq K_1(\mathbb{P}_\Theta),\)
where \(K_1(\mathbb{P}_\Theta)\) is a quantity determined by the prior \(\mathbb{P}_\Theta\). Hence, we can bound the worst-case MBR as follows:

$
\mathfrak{F}^*_{\mathcal{M}} =
\sup_{\mathbb{P}_\Theta \in \Delta(\mathcal{O})}\inf_{\mathbb{P}_\Pi \in \Delta(\mathcal{P})} \mathbb{E}_\Theta \mathbb{E}_\Pi \left[\mathfrak{R}_{\mathcal{M}}(\Pi, \Theta)\right] \leq K_2,
$
where $K_2$ is a quantity independent of the prior.
Furthermore, when the conditions of the minimax theorem are satisfied, the minimax  regret itself can be similarly bounded:
\[
\begin{split}
\mathfrak{M}_{\mathcal{M}} 
&= \inf_{\mathbb{P}_\Pi \in \Delta(\mathcal{P})} \sup_{\theta \in \mathcal{O}} \mathbb{E}_\Pi \left[\mathfrak{R}_{\mathcal{M}}(\Pi, \theta) \right] \\
&= \inf_{\mathbb{P}_\Pi \in \Delta(\mathcal{P})} \sup_{\mathbb{P}_\Theta \in \Delta(\mathcal{O})} \mathbb{E}_\Theta \mathbb{E}_\Pi \left[\mathfrak{R}_{\mathcal{M}}(\Pi, \Theta) \right] \\
&= \sup_{\mathbb{P}_\Theta \in \Delta(\mathcal{O})}  \inf_{\mathbb{P}_\Pi \in \Delta(\mathcal{P})}\mathbb{E}_\Theta \mathbb{E}_\Pi \left[\mathfrak{R}_{\mathcal{M}}(\Pi, \Theta) \right] \leq K_2.
\end{split}
\]
This shows that, under minimax duality conditions, we can apply bounds on MBR to control the minimax regret.
The theorem presented below provides information-theoretic upper bounds on the minimax regret by using the minimum Bayesian regret bounds outlined in \cite[Section V]{gouverneur2022information} under diverse conditions of the reward function and the probability distributions of states and observations.
These bounds are derived using terms associated with a natural Bayesian reinforcement learning approach, specifically the Thompson sampling algorithm.
\begin{theorem} 
Consider the function \(f^\star\) that maximizes the expected utility, defined as

$f^\star = \arg\sup_{f: \mathcal{S} \times \mathcal{O} \rightarrow \mathcal{A}} \mathbb{E}\left[ \sum_{t=1}^T r(Y_t, f(S_t,\Theta)) \right].$
Let \(S_t^\star\) and \(Y_t^\star\) denote the states and observations at time \(t\) when the function \(f^\star\) is followed. Additionally, let \(\hat{S}_t\), \(\hat{Y}_t\), and \(\hat{H}_t\) represent the states, observations, and history at time \(t\) obtained using the Thompson sampling algorithm \cite{russo2016information}, \cite{chapelle2011empirical}. Then, we obtain the following upper bounds for the minimax regret:
    \begin{enumerate}    
        \item If for all $t = 1, \ldots, T$, the random reward obtained by following $f^\star$, $r(\hat{Y}_t, f^\star(\hat{S}_t, \theta))$ is $\sigma_t^2$-sub-Gaussian under $\mathbb{P}_{\hat{Y}_t, \hat{S}_t \mid \hat{H}_t = \hat{h}_t}$ for all $\theta \in \mathcal{O}$ and all $\hat{h}_t \in \mathcal{H}_t$, then 
        $$\mathfrak{M}_{\mathcal{M}}  \leq  \sup_{\mathbb{P}_\Theta} \sum_{t=1}^T\mathbb{E}\left[\sqrt{2\sigma_t^2 D_{KL}(\mathbb{P}_{Y_t^\star, S_t^\star \mid \Theta} \mid\mid \mathbb{P}_{\hat{Y}_t, \hat{S}_t \mid \hat{H}_t})}\right].$$
        \item Suppose that $(\mathcal{Y}\times\mathcal{A} )$ is a metric space with metric $\rho$.   If the reward function $r : \mathcal{Y} \times \mathcal{A} \to \mathbb{R}$ is $L$-Lipschitz under the metric $\rho$, then 
         $$\mathfrak{M}_{\mathcal{M}}  \leq \sup_{\mathbb{P}_\Theta} L \sum_{t=1}^T\mathbb{E}\left[W(\mathbb{P}_{Y_t^\star, S_t^\star \mid \Theta}, \mathbb{P}_{\hat{Y}_t, \hat{S}_t \mid \hat{H}_t})\right].
         $$
         where \( W(\cdot, \cdot) \) represents the Wasserstein distance.
    \end{enumerate}
\end{theorem}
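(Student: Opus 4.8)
The plan is to combine the minimax duality established in the previous section with the minimum Bayesian regret bounds from~\cite[Section V]{gouverneur2022information}. The key observation is that under the stated sub-Gaussianity (resp.\ Lipschitz) hypotheses, the results of~\cite{gouverneur2022information} give, for \emph{each fixed prior} $\mathbb{P}_\Theta$, an upper bound on the minimum Bayesian regret of the form
\[
\mathfrak{F}_{\mathcal{M}}(\mathbb{P}_\Theta) \;\le\; \sum_{t=1}^T \mathbb{E}\!\left[\sqrt{2\sigma_t^2\, D_{KL}\!\left(\mathbb{P}_{Y_t^\star, S_t^\star \mid \Theta}\,\|\,\mathbb{P}_{\hat{Y}_t,\hat{S}_t\mid\hat{H}_t}\right)}\right]
\]
(and analogously with the Wasserstein term and Lipschitz constant $L$ in case 2), where the bound is obtained by analyzing the Thompson sampling policy, whose Bayesian regret upper-bounds $\mathfrak{F}_{\mathcal{M}}(\mathbb{P}_\Theta) = \inf_\pi \mathfrak{BR}_{\mathcal{M}}(\pi,\mathbb{P}_\Theta)$.

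First I would invoke~\Cref{theorem: minimax for continuous} (or~\Cref{corollary: minimax for finite}) to assert $\mathfrak{M}_{\mathcal{M}} = \mathfrak{F}^\star_{\mathcal{M}} = \sup_{\mathbb{P}_\Theta \in \Delta(\mathcal{O})} \mathfrak{F}_{\mathcal{M}}(\mathbb{P}_\Theta)$ --- this is exactly the chain of equalities already displayed in~\Cref{sec:upper_bounds}, and it requires that $\mathfrak{R}_{\mathcal{M}}(\pi,\theta)$ be bounded and $\mathfrak{R}_{\mathcal{M}}(\pi,\cdot)$ continuous (or the state/action/parameter spaces finite), which I would flag as a standing assumption of the theorem. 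Then I would take the supremum over $\mathbb{P}_\Theta$ on both sides of the per-prior bound above: since the right-hand side is a function of $\mathbb{P}_\Theta$ only, $\sup_{\mathbb{P}_\Theta}$ of it is well-defined, and monotonicity of $\sup$ immediately yields
\[
\mathfrak{M}_{\mathcal{M}} = \sup_{\mathbb{P}_\Theta} \mathfrak{F}_{\mathcal{M}}(\mathbb{P}_\Theta) \;\le\; \sup_{\mathbb{P}_\Theta}\sum_{t=1}^T \mathbb{E}\!\left[\sqrt{2\sigma_t^2\, D_{KL}\!\left(\mathbb{P}_{Y_t^\star, S_t^\star \mid \Theta}\,\|\,\mathbb{P}_{\hat{Y}_t,\hat{S}_t\mid\hat{H}_t}\right)}\right],
\]
which is exactly statement~1; statement~2 follows identically from the Wasserstein/Lipschitz version of the MBR bound.

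The two parts then reduce entirely to citing the correct inequality from~\cite{gouverneur2022information}: the information-theoretic bound via KL divergence between the trajectory law under $f^\star$ and under Thompson sampling for the sub-Gaussian case, and the transportation-cost/Wasserstein bound for the $L$-Lipschitz case. I would briefly recall why the expectations inside (over $\Theta$ and over the Thompson-sampling-induced history $\hat{H}_t$) match up: $(S_t^\star, Y_t^\star)$ is the trajectory generated by the optimal oracle map $f^\star$, and $(\hat{S}_t,\hat{Y}_t,\hat{H}_t)$ is generated by Thompson sampling with prior $\mathbb{P}_\Theta$, so both distributions implicitly depend on $\mathbb{P}_\Theta$, which is why the outer $\sup_{\mathbb{P}_\Theta}$ cannot be dropped. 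The main (and really only) obstacle is bookkeeping: making sure the hypotheses under which~\cite{gouverneur2022information} proves the per-prior MBR bounds (sub-Gaussianity of the reward under the posterior-predictive law $\mathbb{P}_{\hat{Y}_t,\hat{S}_t\mid\hat{H}_t=\hat{h}_t}$ uniformly in $\theta$ and $\hat{h}_t$; $L$-Lipschitzness of $r$ on the metric space $(\mathcal{Y}\times\mathcal{A},\rho)$) are stated so that they hold for \emph{every} prior, ensuring the bound survives the supremum; and verifying that the boundedness/continuity conditions needed for the minimax equality are compatible with these hypotheses. No genuinely new estimate is needed beyond what is already in the excerpt and in~\cite{gouverneur2022information}.
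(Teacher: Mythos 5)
Your proposal follows exactly the paper's argument: the paragraphs preceding the theorem establish the chain $\mathfrak{M}_{\mathcal{M}} = \sup_{\mathbb{P}_\Theta}\inf_{\mathbb{P}_\Pi}\mathbb{E}_\Theta\mathbb{E}_\Pi[\mathfrak{R}_{\mathcal{M}}(\Pi,\Theta)] = \sup_{\mathbb{P}_\Theta}\mathfrak{F}_{\mathcal{M}}(\mathbb{P}_\Theta)$ via the minimax duality of \Cref{theorem: minimax for continuous}, and then the theorem is obtained by plugging in the per-prior Thompson-sampling MBR bounds of \cite[Section V]{gouverneur2022information} and taking the supremum over priors, precisely as you describe. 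Your added remarks on flagging the duality hypotheses as standing assumptions and on the prior-dependence of both trajectory laws are correct and consistent with the paper.
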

Thus, we can find 
Furthermore, the minimax regret for specific problems can be upper bounded using specialized bounds as derived in \cite{gouverneur2022information}, \cite{gouverneur2023thompson}, and \cite{dong2018information}.

\subsection{Finite Multi-arm Bandit Problem  with bounded reward}
The finite Multi-Armed Bandit (MAB) problem with bounded rewards is a specific class of MDPs. Let $\mathcal{A}$ be finite. Formally, the finite MAB problem is defined by the class \(\mathcal{B} = (\mathcal{S}, \mathcal{A}, p,y,r, T)\), where \(\mathcal{S} = \{s\}\) and \(Y_t\) is independent of \(S_t\) given \(\Theta\), for all $t$. Moreover, let \(\mathcal{O}\) be a finite set.
Let us introduce a prior distribution \(\mathbb{P}_{\Theta}\). We can then apply~\Cref{corollary: minimax for finite}  as we have a finite set of MDPs with finite states and action spaces.

The bound derived in \cite[Corollary 5]{gouverneur2022information} can be used to establish an upper bound on the minimum Bayesian regret: $
\mathfrak{F}_{\mathcal{B}}(\mathbb{P}_\Theta)\ \leq \sup_{\mathbb{P}_{\Theta} \in \Delta(\mathcal{O})} \sqrt{\frac{1}{2}|A| H(A^\star)T)}
,$
and hence obtain the minimax regret bound:
$
 \mathfrak{M}_{\mathcal{B}} \leq O(\sqrt{|\mathcal{A}|\log|\mathcal{A}|T}),
$
which matches the upper bound obtained in \cite{foster2021statistical} and exceeds the lower bound only by a $\sqrt{\log|\mathcal{A}|}$ factor \cite{audibert2009minimax}.

\subsection{Linear Bandits}

Consider a linear bandit problem defined by \(\mathcal{L} = (\mathcal{S}, \mathcal{A}, p,y,r, T)\), where \(S = \{s\}\) and \(Y_t\) is independent of \(S\) given \(\Theta\). The actions are represented as a \(d\)-dimensional vector, i.e., \(\mathcal{A} \subset \mathbb{R}^d\), and the reward from an action \(a \in \mathcal{A}\) satisfies \(\mathbb{E}^\theta[r(Y_t, a)] = a^T\theta\). Furthermore, we assume the conditions from~\Cref{theorem: minimax for continuous} hold. In addition, consider a ball-structured action space and the parameter space \(\mathcal{A}, \mathcal{O} \subseteq \mathbb{B}^d(0,1)\), where \(\mathbb{B}^d(0,1)\) denotes the \(d\)-dimensional closed Euclidean unit ball. Under these conditions, an algorithm \(\hat{\pi}\)-specifically the Thompson sampling algorithm can achieve a regret bound \cite{dong2018information} given by: $
\mathfrak{BR}_{\mathcal{L}}(\hat{\pi}, \mathbb{P}_\Theta)\leq O(d \sqrt{T \log T}).$ This further implies that the MBR can be bounded above, i.e., $
\mathfrak{F}_{\mathcal{L}}(\mathbb{P}_\Theta)\leq  O(d \sqrt{T \log T}).  $
Consequently, under the conditions of the minimax theorem, the minimax regret is bounded by $
\mathfrak{M}_{\mathcal{L}}  \leq O(d \sqrt{T \log T}),
$
which exceeds the lower bound for this problem \cite{foster2021statistical} by only a $\sqrt{\log T}$ factor.

\subsection{Contextual Bandits}

Consider a contextual bandit problem defined by \(\mathcal{C} = (\mathcal{S}, \mathcal{A}, p,y,r, T)\). The transition kernel \(p\) is such that the next state at time \(t+1\) is independent of both the previous state and the action taken at time \(t\).
 We also assume the conditions from~\Cref{corollary: minimax for finite} hold. Assuming the rewards are bounded in \([0, 1]\), for any contextual bandit problem \(\mathcal{C}\), there exists an algorithm \(\hat{\pi}\) (specifically, the Thompson sampling algorithm) such that the Bayesian regret after \(T\) rounds is bounded as follows \cite{gouverneur2023thompson}: $
\mathfrak{BR}_{\mathcal{C}}(\hat{\pi}, \mathbb{P}_\Theta) \leq  \sqrt{\frac{|\mathcal{A}|T H(\Theta)}{2}}.
$ This further implies that the MBR can be bounded above, i.e., $
\mathfrak{F}_{\mathcal{C}}(\mathbb{P}_\Theta)\leq \sqrt{\frac{|\mathcal{A}|T H(\Theta)}{2}}.  $ Therefore, under the conditions of the minimax theorem, the minimax regret is bounded by $
\mathfrak{M}_{\mathcal{C}}  \leq O\left( \sqrt{|\mathcal{A}|T \log|\mathcal{O}|}\right),
$ which matches the optimal rate  \cite[Section 1.2]{foster2020beyond}.

\section{Conclusion}
\label{sec:conclusion}
This work demonstrates how Bayesian regret bounds can be applied to establish information-theoretic minimax regret bounds for RL problems in the form of Markov decision processes. We extend the minimax theorem to more general spaces beyond finite state and action spaces, deriving minimax regret bounds across various problem settings, including bandit, contextual bandit, and reinforcement learning problems. Our analysis recovers upper bounds for specific problem classes. Future research directions include exploring additional conditions for duality and investigating other suitable regret definitions, including risk functionals.
\newpage
\bibliographystyle{IEEEtran}
\balance
\bibliography{references}

\end{document}